\documentclass[letterpaper, 10 pt, conference]{IEEEtran}

\IEEEoverridecommandlockouts

\newcommand{\inertia}{\textrm{in}}

\renewcommand{\inertia}{\textsf{inertia}}

\usepackage{subcaption}
\usepackage{graphicx} 
\usepackage{comment}
\usepackage{overpic}
\usepackage{cite}
\usepackage{algorithm}
\usepackage{algorithmic}
\usepackage{amsthm}
\usepackage{amsmath} 
\usepackage{amssymb}
\newtheorem{theorem}{Theorem}

\newtheorem{definition}{Definition}
\usepackage{bm}
\usepackage{balance}
\usepackage{hyperref}
\hypersetup{
    colorlinks=true,
    linkcolor=black,
    filecolor=blue,      
    urlcolor=black,
    pdftitle={Inertia-Corrected Game},
    }

\usepackage{xcolor}
\floatstyle{plaintop}
\restylefloat{table}
\usepackage[tableposition=top]{caption}

\title{\LARGE \bf
Inertia-Corrected Newton Method For Generalized Nash Equilibria in Dynamic Games with Optimality Verification
}

\author{Zhiyuan Zhang$^{1}$ and Panagiotis Tsiotras$^{2}$
\thanks{This work is sponsored by ONR awards N00014-23-2308 and N00014-23-2353 and NSF award IIS-2008686}
\thanks{$^{1}$ School of Aerospace Engineering, Institute for Robotics and Intelligent Machines, Georgia Institute of Technology, Atlanta, GA 30332, USA, Email:
        {\tt\small zzhang615@gatech.edu}}%
\thanks{$^{2}$ School of Aerospace Engineering, Institute for Robotics and Intelligent Machines, Georgia Institute of Technology, Atlanta, GA 30332, USA, Email:
        {\tt\small tsiotras@gatech.edu}}%
}

\begin{document}
\maketitle
\thispagestyle{empty}
\pagestyle{empty}

\begin{abstract}
Newton methods efficiently find Generalized Nash Equilibria (GNE) in dynamic games by solving for the KKT necessary conditions.
These methods are fast and can support multi-agent Model Predictive Control (MPC) for highly dynamic robots.
However, a small KKT residual alone does not certify that the returned solution satisfies the
second-order sufficient conditions for a local GNE.
In this paper, we propose an efficient numerical method to verify the second-order sufficient conditions (SOSC) for a local GNE.
We connect the inertia of the agent KKT matrix with the positive definiteness of the reduced Hessian of the cost function, projected onto the null space of the constraints.
Furthermore, we introduce an inertia-corrected update step that improves convergence to local GNEs by destabilizing strict saddle points with weak cross-agent coupling.
Our main contribution is a fast Newton solver for Constrained Dynamic Games that provides efficient optimality checking.
Through numerical benchmarks, we demonstrate the proposed solver's runtime and convergence performance in practical multi-agent planning problems.
We also validate the solver's real-time capabilities in physical experiments using a platform of miniature autonomous race cars.
\end{abstract}

\section{Introduction}

\IEEEPARstart{I}{n} most multi-agent planning problems, the agents' objectives are intertwined.
For such problems, traditional single-agent optimal control methods are insufficient, as they typically assume a static environment and fail to account for the reactive nature of other agents \cite{gt_1, fridovich2020efficient}.
Discrete-time dynamic games provide a robust framework for multi-agent motion planning with coupled cost functions.
Similar to Model Predictive Control (MPC), dynamic games evolve over multiple stages, where the state at each stage depends on the control inputs from the previous stage.
A physics-based, time-discretized dynamics model governs this state evolution.
Many multi-robot planning problems can be modeled as dynamic games, including highway merging \cite{schwarting2019social, algames}, drone racing \cite{gt_1}, and social navigation \cite{ilqgame}.

A common solution concept for constrained dynamic games is the Generalized Nash Equilibrium (GNE).
A GNE is a set of control strategies for all agents such that no agent can unilaterally improve its cost without violating shared constraints.
Real-time numerical GNE solvers are fundamental for enabling MPC-style receding horizon control in interactive multi-agent environments.

In recent years, Newton methods based on KKT conditions have emerged as a promising direction
for solving dynamic games subject to constraints.
They can handle coupled state constraints and find GNE in practical dynamic games in sub-second timeframes \cite{algames, rd3g}. These advances open the door for real-time game-theoretic control.

However, the KKT conditions are only first-order necessary conditions.
Solutions satisfying KKT conditions may, in fact, be saddle points rather than local minima.
Despite this drawback, first-order solutions are often accepted in practice due to their simplicity and the minimal computational effort required.
Solvers such as ALGAMES \cite{algames} and RD3G \cite{rd3g} have demonstrated successful real-time receding-horizon control implementations, although the saddle-point vulnerability remains a well-known trade-off for speed.
While certain game structures, such as those with convex cost functions and decoupled constraints, guarantee that a KKT point is a true GNE \cite{rosen1965existence}, this property does not generally hold for many practical robotic games, which are highly nonlinear and non-convex.

In this paper, we present an efficient Newton solver for GNE that explicitly verifies the local sufficiency conditions.
To the best of our knowledge, this is the first work of its kind to provide such verification for \textit{constrained} dynamic games.
In addition, our solver uses an inertia-corrected descent step that destabilizes 
strict saddle points with sufficiently weak cross-agent coupling.
As shown in our numerical benchmarks, this key behavior improves robustness against saddle points compared to standard KKT solvers.

\section{Related Work}

\subsection{DDP-based Methods}
Dynamic-programming approaches extend Differential Dynamic Programming (DDP) to multi-agent games.
For example, iLQGames~\cite{ilqgame} repeatedly approximates a general-sum game by a linear-quadratic game, computes a local feedback solution through a backward pass, and updates the nominal trajectory through a forward rollout.
The standard iLQGames formulation is unconstrained, although constraint-compatible extensions have been proposed~\cite{ilqgame_barrier1}.
These methods target first-order stationarity without additional convexity assumptions, a stationary solution need not be a local Nash equilibrium.

\subsection{Newton-based Methods}

Newton methods, instead, solve the joint KKT conditions directly.
Multiple-shooting formulations use the states as decision variables and enforce the dynamics as equality constraints, enabling the use of sparse linear algebra solvers across the planning horizon.
ALGAMES~\cite{algames} combines this sparse structure with an augmented-Lagrangian treatment of inequalities and demonstrated faster solution times than iLQGames in autonomous-driving benchmarks.
RD3G~\cite{rd3g} uses an active-set formulation and demonstrated a 29-Hz receding-horizon controller on two autonomous race cars.
Despite their practical speed, these solvers terminate based on first-order KKT residuals and do not certify the per-agent second-order conditions.

\subsection{Saddle Points and Second-Order Verification}

Game updates can be viewed as a dynamical system whose fixed points include both local equilibria and non-equilibrium stationary points.
Local Symplectic Surgery~\cite{lss} for zero-sum games and Double Follow-the-Ridge~\cite{dftr} for general-sum games modify the solver dynamics to avoid undesirable stationary points.
These methods, however, are not designed to exploit the sparse constrained multiple-shooting structure, and are therefore not fast enough for real-time robotic planning.
The present work addresses this gap by combining a sparse Newton solver with efficient per-agent SOSC verification and an inertia-corrected update that destabilizes a class of strict saddle points.

\section{Problem Formulation}

Let $i \in \{1, \dots, N\}$ denote the agent index in an $N$-agent game. For a problem with horizon $T$, let $x^i_k \in \mathbb{R}^n$, $k=0,\dots,T$, denote the state of agent $i$, and let $u^i_k \in \mathbb{R}^m$, $k=0,\dots,T-1$, denote its control input.
The initial state $x^i_0$ is fixed.
Thus, the primal decision vector contains $x^i_1,\dots,x^i_T$ and $u^i_0,\dots,u^i_{T-1}$.
For notational brevity, we adopt the following stacked vector convention:
$x_k$ denotes the concatenated states of all agents at stage $k$;
$x^i$ denotes the trajectory of agent $i$ over all stages;
and $x$ denotes the joint trajectory of all agents over the entire horizon.
Similar stacking logic applies to the control variables $u$ and the multipliers.

The Generalized Nash Equilibrium Problem (GNEP) with separable dynamics is formulated as:
\begin{subequations}\label{eq:game}
\begin{align}
&\min_{x^i,u^i} J^i(x,u^i),\quad i = 1, \dots, N, \\
\textrm{s.t. } & x^i_{k+1} = f(x^i_k, u^i_k), \quad k=0,\dots,T-1,  \label{eq:dyn_con}\\
& h(x_k) \leq 0, \quad k=1,\dots,T , \label{eq:ineq_con}
\end{align}
where
\begin{align}
& J^i(x,u^i) = \sum_{k=0}^{T-1} J^i_k(x_k, u^i_k) + J^i_T(x_T).
\end{align}
\end{subequations}
Here, $f(\cdot)$ represents the dynamics, $h(\cdot)$ are the shared state constraints (e.g., collision avoidance), and $J^i(\cdot)$ is the cost function for agent $i$.
For simplicity, we assume homogeneous dynamics and identical constraints for all agents.
However, our analysis supports heterogeneous dynamics and constraints.

An (open-loop) joint control strategy $u^* = (u^{1*}, \dots, u^{N*})$ is a local 
Generalized Nash Equilibrium (GNE) if, for every agent $i$, there exists a neighborhood $\mathcal{N}^i$ around $u^{i*}$ such that
\begin{equation}\label{eq:gne}
J^i(x(u^{i*}, u^{-i*}), u^{i*}) \leq J^i(x(u^{i}, u^{-i*}), u^{i})
\end{equation}
for all $u^i \in \mathcal{N}^i \cap \Omega^i(u^{-i*})$, where $\Omega^i(u^{-i*})$ is the set of feasible controls for agent $i$ given the fixed strategies of other agents $u^{-i*}$.

\section{Methodology}

\subsection{Residual Descent}
We define the Lagrangian for agent $i$ in the GNEP (\ref{eq:game}) as
\begin{align}\label{eq:lagrangian}
    \mathcal{L}^i &= \sum_{k=0}^{T-1} J^i_k(x_k, u^i_k) + J^i_T(x_T)
    + \sum_{k=1}^{T} (\mu^i_k)^\top h(x_k)\nonumber \\
    &\quad + \sum_{k=0}^{T-1} (\lambda^{i}_k)^\top [f(x_k^i, u_k^i) - x_{k+1}^i] ,
\end{align}
where $\lambda$ and $\mu$ are the dual variables (e.g., multipliers) for the dynamics and inequality constraints, respectively.
Let $\mathcal{F}^i(x^i, u^i) = 0$ represent the vectorized dynamics constraints across the horizon, and let $\mathcal{H}(x) \leq 0$ represent the concatenated inequality constraints.
The KKT necessary conditions~\cite{gnep} for agent $i$ are:
\begin{subequations}\label{eq:kkt}
\begin{align}
\nabla_{x^i} \mathcal{L}^i &= 0, \quad \nabla_{u^i} \mathcal{L}^i = 0, \\
\mathcal{F}^i(x^i, u^i) &= 0, \label{eq:fi}\\
\mathcal{H}(x) &\leq 0, \label{eq:h}\\
\mu^i \geq 0, \quad & (\mu^i)^\top \mathcal{H}(x) = 0. \label{eq:comp_slack}
\end{align}
\end{subequations}
We define the KKT residual vector for agent $i$ as $r^i = (\nabla_{x^i}\mathcal{L}^i, \nabla_{u^i}\mathcal{L}^i, \mathcal{F}^i, \mathcal{H}_{\text{active}})$, and the joint residual as $r = (r^1, \dots, r^N)$.
We employ an active-set method; that is, only constraints considered active are included in the residual computation via 
$\mathcal{H}_{\text{active}}$.
Furthermore,  strict complementarity is assumed: if an inequality constraint is active, its corresponding multiplier $\mu$ is strictly positive.
To improve convergence robustness and provide gradient information for the inactive constraints, we augment the cost function of each agent with a standard logarithmic barrier term~\cite{opt_textbook}. 
Specifically, for the inactive constraints $k \notin \mathcal{A}$, the agent cost $J^i$ is augmented with a term of the form
    $ - \tau \sum_{k \notin \mathcal{A}} \log(-\mathcal{H}_k(x)),$
where $\tau > 0$ is the barrier parameter, and where $\mathcal{A}$ denotes the set of active constraints.
While the active-set strategy explicitly handles boundary enforcement via Lagrange multipliers, the barrier terms on the inactive constraints act as a ``repulsive potential,'' shaping the cost landscape to guide the solver away from the feasible boundary during the descent and preventing aggressive steps from violating currently inactive constraints.

Defining the primal-dual decision variable as $y = (x, u, \lambda, \mu)$, the Newton descent step $\Delta y$ results from the solution to the linear system
\begin{equation}\label{eq:newton}
    \mathrm{D}_y r(y) \Delta y = -r(y),
\end{equation}
where $\mathrm{D}_y r(y)$ denotes the Jacobian of the joint KKT residual.
This system admits a unique solution when the matrix $D_y r(y)$ is nonsingular.

\subsection{Sufficiency Verification}

A solution satisfying the KKT conditions (\ref{eq:kkt}) is a candidate GNE, since the first-order conditions alone do not certify local optimality for each agent.
For constrained optimization, the Second-Order Sufficient Conditions (SOSC) require that the Hessian of the Lagrangian be positive definite on the critical cone~\cite{opt_textbook}.
Let $z^i=(x^i,u^i)$ denote the primal decision vector of agent $i$, and let $p^i$ denote a feasible first-order perturbation of $z^i$.
Under the above strict-complementarity assumption, the critical cone at a KKT point $z^*$ reduces to the tangent subspace
$\mathcal{C}^i(z^*)=\{p^i:A^i(z^*)p^i=0\}$, where $A^i$ is the Jacobian of the dynamics and active inequality constraints.
Let $H^i(z^*)=\nabla^2_{z^i z^i}\mathcal{L}^i(z^*)$ denote the corresponding Lagrangian Hessian.
The per-agent SOSC, and hence a sufficient condition for a strict local GNE, is
\begin{equation}\label{eq:sosc}
    (p^i)^\top H^i(z^*)p^i > 0, \quad
    \forall p^i \in \mathcal{C}^i(z^*) \setminus \{0\}, \quad i=1,\ldots,N.
\end{equation}
Checking condition \eqref{eq:sosc} directly requires computing a basis for the null space of the active constraints in (\ref{eq:h}) and (\ref{eq:fi}), 
which is computationally expensive for large, even sparse systems.
%
Instead, we adopt an approach to certify a candidate solution as a strict minimizer by checking the inertia of the KKT matrix.

\begin{definition}[Matrix Inertia] \label{def:inertia}
The inertia of a matrix is defined as the tuple $(n_+, n_-, n_0)$ representing the number of positive, negative, and zero eigenvalues.
\end{definition}

Consider the Equality Quadratic Programming (EQP) problem associated with the local quadratic approximation of the game:
\begin{equation}\label{eq:eqp}
\begin{split}
\min_{z^i} \quad&\frac12 (z^i)^\top H^i z^i + (g^i)^\top z^i, \\
\text{s.t.} &\quad A^i z^i = 0.
\end{split}
\end{equation}
Construct the KKT matrix $K^i$:
\begin{equation}
    K^i = \begin{bmatrix}
        H^i & (A^i)^\top \\
        A^i & 0 \\
    \end{bmatrix},
\end{equation}
where $A^i \in \mathbb{R}^{t \times d}$ is the Jacobian matrix of the active constraints for player $i$.
Specifically, its rows consist of the first-order partial derivatives of all active constraints with respect to player $i$'s decision variables.
Here, $t$ denotes the total number of active constraints for player $i$, and $d$ is the dimension of the $i$th agent's decision space.
If $\mathrm{rank}(A^i) = t$, that is, $A^i$ has full row rank,
%
%
\begin{theorem}[KKT Inertia Criterion]\label{th:in}
Following~\cite[Th.~2.1]{gould}, the EQP (\ref{eq:eqp}) has a unique minimizer if and only if:
\begin{equation}
    \inertia (K^i) = (d, t, 0).
\end{equation}
\end{theorem}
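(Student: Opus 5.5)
The plan is to reduce the inertia of $K^i$ to the inertia of the reduced Hessian via a null-space congruence transformation, and then apply Sylvester's law of inertia. Since $A^i$ has full row rank $t$, let $Z\in\mathbb{R}^{d\times(d-t)}$ have columns spanning $\ker A^i=\mathcal{C}^i$, and pick $Y\in\mathbb{R}^{d\times t}$ so that $[Y\ Z]$ is nonsingular. Because $A^iY$ is then an invertible $t\times t$ matrix, we may choose $Y$ with $A^iY=I_t$. Conjugating $K^i$ by the nonsingular block matrix
\begin{equation*}
Q=\begin{bmatrix} Z & Y & 0\\ 0 & 0 & I_t\end{bmatrix}
\end{equation*}
gives
\begin{equation*}
Q^\top K^i Q=\begin{bmatrix} Z^\top H^i Z & Z^\top H^i Y & 0\\ Y^\top H^i Z & Y^\top H^i Y & I_t\\ 0 & I_t & 0\end{bmatrix}.
\end{equation*}
I would then eliminate the off-diagonal blocks by further congruences that use the identity blocks as pivots. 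First, the $I_t$ in the last block row lets us clear $Y^\top H^i Y$ and $Y^\top H^i Z$. This leaves $Z^\top H^i Z \oplus \begin{bmatrix}0&I\\ I&0\end{bmatrix}$, possibly with a remaining coupling in the $Z$–$Y$ block, which can also be cleared using the $I$ pivot.

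By Sylvester's law,
\begin{equation*}
\inertia(K^i)=\inertia(Z^\top H^i Z)+(t,t,0),
\end{equation*}
since $\begin{bmatrix}0&I_t\\ I_t&0\end{bmatrix}$ has eigenvalues $\pm1$, each with multiplicity $t$. This is the standard identity of Gould [Th.~2.1]. The main care point is checking that each elimination step really is a congruence $E^\top(\cdot)E$ and not a one-sided row operation. I would write out the unit block-triangular matrices explicitly to confirm this.

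Next, I would relate the EQP (\ref{eq:eqp}) to the reduced Hessian. Every feasible point has the form $z^i=Zw$, so the EQP is equivalent to the unconstrained problem $\min_w \tfrac12 w^\top Z^\top H^i Z w+(Z^\top g^i)^\top w$. An unconstrained quadratic has a unique minimizer if and only if its Hessian $Z^\top H^iZ$ is positive definite. If it is positive definite, the minimizer is unique. If it has a negative eigenvalue, the objective is unbounded below. If it is positive semidefinite but singular, there is either no minimizer or a whole affine family of them; in the latter case we move along a kernel direction $v$, and $Z^\top g^i\in\mathrm{range}$ makes the objective constant along that line.

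Finally, $Z^\top H^iZ\succ0$ is equivalent to $\inertia(Z^\top H^iZ)=(d-t,0,0)$, which by the identity above is equivalent to $\inertia(K^i)=(d,t,0)$. This closes both directions. Note also that, by the definition of $Z$, positive definiteness of $Z^\top H^iZ$ is exactly the per-agent SOSC (\ref{eq:sosc}).

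The main obstacle is the congruence bookkeeping in the first step, in particular confirming that the $t$ negative eigenvalues come entirely from the constraint block, independent of $H^i$. I would verify this by checking a small case ($d=2$, $t=1$) explicitly.
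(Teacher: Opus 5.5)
Your proposal is correct. The paper gives no argument of its own beyond citing Gould's Theorem~2.1, and your approach is precisely the standard proof of that cited result: the null-space congruence yielding $\inertia(K^i)=\inertia(Z^\top H^i Z)+(t,t,0)$, combined with the reduction of the EQP to an unconstrained quadratic in $w$. The bookkeeping you flag is not an obstacle, because a single congruence by the unit block-lower-triangular matrix whose last block row is $[-Y^\top H^i Z,\ -\tfrac12 Y^\top H^i Y,\ I_t]$ takes $Q^\top K^i Q$ directly to $\operatorname{blockdiag}\bigl(Z^\top H^i Z,\ \bigl[\begin{smallmatrix}0&I_t\\ I_t&0\end{smallmatrix}\bigr]\bigr)$, so no residual $Z$--$Y$ coupling remains.
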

\medskip

In our context, $d = T(n+m)$ is the number of primal variables for agent $i$, and $t$ is the total number of its dynamics equalities and active inequality constraints.
Using 
Sylvester's Law of Inertia~\cite{matrix_computation}, one can employ a sparse $LDL^\top$ decomposition 
\begin{equation}
    K^i = L^i D^i (L^i)^\top,
\end{equation}
to efficiently verify the inertia of the matrix $K^i$.
By simply evaluating the signs of the diagonal elements of $D^i$ (pivots), this approach avoids the prohibitive cost of an eigenvalue decomposition while remaining numerically stable and preserving the sparsity of the dynamic game structure~\cite{benzi2005numerical}.

The factorization cost depends on the matrix dimension and the fill-in created by the sparse elimination ordering.
Let $s_i=d_i+t_i$ be the order of $K^i$, where $d_i=T(n+m)$ is the number of primal state and control variables for agent $i$, and $t_i$ is the number of its equality and currently active inequality constraints.
If column $j$ of the factor $L^i$ contains $\ell_j$ nonzero entries, the numerical sparse $LDL^\top$ factorization requires $\mathcal{O}(\sum_{j=1}^{s_i}\ell_j^2)$ operations and $\mathcal{O}(\operatorname{nnz}(L^i))$ memory, where $\operatorname{nnz}$ denotes the number of non-zero elements in a matrix. 
The subsequent triangular solves require $\mathcal{O}(\operatorname{nnz}(L^i))$ operations.
In the dense worst case, these bounds reduce to $\mathcal{O}(s_i^3)$ time and $\mathcal{O}(s_i^2)$ memory.
Once the factorization is available, determining the inertia from the pivots of $D^i$ requires only $\mathcal{O}(s_i)$ of additional work.



\subsection{Inertia Correction}\label{sec:ic}
Standard Newton methods may converge to stationary points that fail the per-agent SOSC.
To improve robustness, we introduce an inertia correction step to regularize the global game Jacobian as
\begin{equation} \label{eq:correct}
\tilde{D}_y r(y) = D_y r(y) + E,
\end{equation}
where $E$ is the regularization matrix.
The inertia-corrected Newton step (\ref{eq:newton}) is then computed using the regularized game Jacobian.
\begin{equation}\label{eq:ic_newton}
    \tilde{D}_y r(y) \Delta y = -r(y).
\end{equation}

We now develop the regularization matrix $E$.
If the inertia check $\inertia(K^i) \neq (d, t, 0)$ fails for any agent, we regularize the game Jacobian using a near-minimal feasible shift.
Specifically, let the agent-specific KKT matrix
\begin{equation}
K^i(\epsilon)=
\begin{bmatrix}
H^i+\epsilon I & (A^i)^\top\\
A^i & 0
\end{bmatrix},
\end{equation}
and define the threshold
\begin{equation}\label{eq:reg}
\epsilon_*^i := \inf\{\epsilon\geq0:
\inertia(K^i(\epsilon))=(d,t,0)\}.
\end{equation}
The binary search returns a feasible $\epsilon^i\leq\epsilon_*^i+\tau_\epsilon$ whose regularized KKT matrix has the target inertia.
Computing this near-minimal shift requires repeated inertia calculations for the regularized KKT matrix.
In our implementation, the $LDL^\top$ decomposition is separated into an expensive symbolic factorization and a comparatively inexpensive numerical factorization.
Since the regularization does not change the sparsity structure of the KKT matrix, only the numerical computation is repeated online.
This allows us to use a binary search to determine a near-minimal correction while reusing the symbolic factorization.

A common alternative to our minimal inertia correction method 
is a geometric strategy that repeatedly multiplies the regularization by a large factor (e.g., ten) until the target inertia is obtained.
Although this strategy is simple, the accepted shift can exceed the smallest sufficient value by almost an order of magnitude.
On the feasible tangent space, the corrected step is governed by the inverse of the reduced Hessian plus $\epsilon^i I$.
When $\epsilon^i$ is much larger than the local curvature, this inverse is approximately $(1/\epsilon^i)I$, so the primal update approaches a small projected-gradient step and loses much of the curvature scaling responsible for Newton's fast local progress~\cite{opt_textbook}.
The proposed near-minimal correction preserves more of the original curvature while still enforcing the required inertia.
Because each binary-search trial requires only a numerical $LDL^\top$ refactorization with the same sparsity pattern, this finer adjustment adds modest computational cost.
For a search interval $[\epsilon_{\mathrm{L}},\epsilon_{\mathrm{U}}]$ and termination tolerance $\tau_\epsilon$, at most
$B_i=\lceil\log_2((\epsilon_{\mathrm{U}}-\epsilon_{\mathrm{L}})/\tau_\epsilon)\rceil$
numerical factorizations are required for an agent that needs correction.

The inertia correction operation then applies a block-diagonal regularization matrix $E = \textrm{blockdiag}(E^1, \dots, E^N)$ as in~\eqref{eq:correct}, where each block is defined as $E^i = \textrm{diag} (\epsilon^i I, 0)$.
This modification is designed to repel the class of strict saddle points characterized below while preserving the local Newton step near a regular GNE.

\medskip

The proposed procedure is summarized in Algorithm~\ref{alg:inertia_correction}.

\begin{algorithm}
\caption{Inertia-Corrected Newton Method}
\label{alg:inertia_correction}
\begin{algorithmic}
\REQUIRE Initial guess $y$, residual tolerance $\tau$, correction tolerance $\tau_\epsilon$
\ENSURE Solution $y^*$, SOSC certificate $\sigma$
\REPEAT
    \STATE Update active set $\mathcal{A}$ based on current constraints
    \FOR{$i=1$ \TO $N$}
        \STATE Augment $J^i$ with barriers for inactive constraints
        \STATE Construct $H^i$ and $A^i$ for the active constraints
        \STATE $d_i\leftarrow\dim(z^i)$, $t_i\leftarrow\operatorname{rows}(A^i)$
        \STATE Compute Inertia: $(n_+, n_-, n_0) \leftarrow \inertia (K^i)$
        \IF{$(n_+,n_-,n_0)=(d_i,t_i,0)$}
            \STATE $\epsilon^i \leftarrow 0$
        \ELSE
            \STATE Determine a near-minimal feasible $\epsilon^i$ to correct \inertia$(K^i)$
        \ENDIF
    \ENDFOR
    \STATE Construct $\tilde{\mathrm{D}}_y r(y)$ using regularized $H^i + \epsilon^i I$
    \STATE Solve $\tilde{\mathrm{D}}_y r(y) \Delta y = -r(y)$
    \STATE $y \leftarrow y + \Delta y$
\UNTIL{$\|r(y)\| < \tau$}
\STATE Update active set and construct each unregularized $K^i(y)$
\STATE $\sigma \leftarrow \text{True}$
\FOR{$i=1$ \TO $N$}
    \STATE $d_i\leftarrow\dim(z^i)$, $t_i\leftarrow\operatorname{rows}(A^i)$
    \IF{$\inertia(K^i(y))\neq(d_i,t_i,0)$}
        \STATE $\sigma \leftarrow \text{False}$
    \ENDIF
\ENDFOR
\RETURN $y, \sigma$
\end{algorithmic}
\end{algorithm}

\subsection{Local Convergence Near a GNE}
The following local result uses the standard regularity conditions for Newton methods applied to a KKT system. In particular, the per-agent conditions ensure that no inertia correction is needed near the solution, while nonsingularity of the joint game Jacobian accounts for cross-agent coupling.

\begin{theorem}[Local Convergence Near a GNE]\label{th:local_convergence}
Let $y^*$ be a local GNE satisfying $r(y^*)=0$, strict complementarity, and Linearly Independent Constraint Qualification (LICQ) 
for every agent.
Assume that the active set is fixed in a neighborhood of $y^*$.\footnote{For suitable active-set rules, finite identification of the solution's active constraints is a standard local result under nondegeneracy conditions~\cite{oberlin2006active}. The active set may change away from the solution; the theorem concerns the local iterations after this identification has occurred.} 
Assume also that the strong SOSC holds for every agent, that is,
\begin{equation}
    (p^i)^\top H^i(y^*)p^i>0,
    \quad \forall p^i\in\ker A^i(y^*)\setminus\{0\}.
\end{equation}
Furthermore, assume that the joint game Jacobian $D_y r(y^*)$ is nonsingular and that $D_y r$ is locally Lipschitz continuous. Then, for an initial point sufficiently close to $y^*$, the iterates generated by Algorithm~\ref{alg:inertia_correction} converge quadratically to $y^*$.
\end{theorem}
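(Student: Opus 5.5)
The plan is to reduce Algorithm~\ref{alg:inertia_correction} to the plain Newton iteration \eqref{eq:newton} in a neighborhood of $y^*$, and then invoke the classical local quadratic convergence theorem for Newton's method on a nonlinear system with nonsingular, locally Lipschitz Jacobian. The reduction rests on one claim: near $y^*$ every per-agent inertia test succeeds. Then the algorithm sets $\epsilon^i=0$ for all $i$, so $E=0$ and $\tilde{D}_y r(y)=D_y r(y)$. Because the active set is assumed fixed near $y^*$, the residual $r$ is a single smooth map on that neighborhood, with no switching between active-set systems.

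First I show that $\inertia(K^i(y^*))=(d,t,0)$ for each agent. LICQ gives $\mathrm{rank}(A^i(y^*))=t$. The strong SOSC says the reduced Hessian $Z^\top H^i(y^*) Z$ is positive definite, where $Z$ is a basis of $\ker A^i(y^*)$. By Theorem~\ref{th:in} (Gould's criterion), the EQP \eqref{eq:eqp} then has a unique minimizer, so the inertia is $(d,t,0)$. Alternatively, the standard congruence argument gives the count directly: the inertia of $K^i$ equals the inertia of $Z^\top H^i Z$ plus $(t,t,0)$. In particular $K^i(y^*)$ is nonsingular.

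Next I extend this to a neighborhood. The map $y\mapsto K^i(y)$ is continuous there, since $H^i$ and $A^i$ depend continuously on $y$; this follows from local Lipschitz continuity of $D_y r$, whose blocks contain $H^i$ and $A^i$ under the fixed active set. The eigenvalues of a symmetric matrix depend continuously on its entries, and no eigenvalue of $K^i(y^*)$ is zero. So on a ball $B_\delta(y^*)$ no eigenvalue can cross zero, and the inertia stays $(d,t,0)$. One caveat: the barrier terms on inactive constraints enter $H^i$, but they are smooth near $y^*$ because inactive constraints are strictly negative there, so continuity still holds. Taking the minimum radius over the finitely many agents, every iterate in $B_\delta(y^*)$ passes the test and produces an unregularized Newton step.

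Finally, I apply the standard result (e.g.\ Nocedal--Wright, Thm.~11.2). Since $D_y r(y^*)$ is nonsingular and $D_y r$ is Lipschitz with constant $L$, there is a radius $\rho\le\delta$ such that for $y_0\in B_\rho(y^*)$,
\[
\|y_{k+1}-y^*\|\le L\|D_y r(y^*)^{-1}\|\,\|y_k-y^*\|^2 .
\]
Shrinking $\rho$ makes this a contraction, so the iterates stay in $B_\rho\subset B_\delta$ by induction. This closes the loop: the inertia test keeps passing, so the step stays the pure Newton step, and convergence is quadratic.

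The main obstacle is the inertia-persistence step. I need to be sure that the $H^i$ used in the test, which includes the barrier terms and is evaluated at the current iterate rather than at $y^*$, converges to the Hessian appearing in the hypothesis. I would handle this by interpreting $r$ and $H^i$ as the barrier-augmented quantities throughout, so that $y^*$ is a root of that same system.
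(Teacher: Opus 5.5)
Your proposal is correct and follows the paper's proof essentially step for step. You obtain inertia $(d_i,t_i,0)$ at $y^*$ from LICQ, strong SOSC and Theorem~\ref{th:in}, show it persists by continuity of the symmetric matrix $K^i(y)$ under the fixed active set so that $\epsilon^i=0$ and $\tilde{D}_y r=D_y r$, and then invoke the standard local Newton theorem; your induction keeping the iterates in the ball where the inertia test passes, and your remark on the barrier-augmented Hessian, make explicit two points the paper leaves implicit.
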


\begin{proof}
By LICQ and the strong SOSC, Theorem~\ref{th:in} yields
\begin{equation}
    \inertia(K^i(y^*))=(d_i,t_i,0)
\end{equation}
for every agent $i$, where $d_i$ is the dimension of its primal decision vector and $t_i$ is the number of equality and active inequality constraints in $A^i$.
Because the active set is locally fixed by assumption, the KKT matrix retains the same constraint blocks along the local iterates.
Since the problem derivatives are continuous, $K^i(y)$ is a continuous symmetric matrix-valued function near $y^*$. Its eigenvalues therefore remain bounded away from zero in a sufficiently small neighborhood of $y^*$, and its inertia remains $(d_i,t_i,0)$. Consequently, Algorithm~\ref{alg:inertia_correction} selects $\epsilon^i=0$ for every agent throughout this neighborhood.

The corrected game Jacobian therefore coincides locally with the exact Jacobian, $\tilde{D}_y r(y)=D_y r(y)$, and the proposed update reduces to the standard Newton iteration
\begin{equation}
    y^{(k+1)}=y^{(k)}-[D_y r(y^{(k)})]^{-1}r(y^{(k)}).
\end{equation}
Since $D_y r(y^*)$ is nonsingular and $D_y r$ is locally Lipschitz continuous, the standard local Newton convergence theorem yields quadratic convergence for all initial points sufficiently close to $y^*$~\cite{opt_textbook}.
\end{proof}

\subsection{Instability of a Class of Saddle Points}

We next show that inertia correction makes a nontrivial class of strict saddle points locally unstable. 
Unlike the preceding result in Theorem~\ref{th:local_convergence}, 
this statement concerns the stability of the actual discrete update rather than its continuous-time approximation.

Consider the damped update map
\begin{equation}\label{eq:update_map}
    \Phi_\alpha(y)
    =y-\alpha[\tilde{D}_y r(y)]^{-1}r(y),
    \quad 0<\alpha\leq1,
\end{equation}
where Algorithm~\ref{alg:inertia_correction} corresponds to $\alpha=1$. At a KKT point $\tilde y$, define
\begin{equation}
    D_y r(\tilde y)=K_{\mathrm{diag}}+C,
\end{equation}
where $K_{\mathrm{diag}}=\operatorname{blockdiag}(K^1,\ldots,K^N)$ contains the per-agent KKT matrices and $C$ contains the cross-agent derivatives.
More precisely, $C$ collects the off-diagonal blocks of the joint Jacobian: for $i\neq j$, the $(i,j)$ block contains the derivatives of agent $i$'s KKT residual $r^i$ with respect to agent $j$'s primal-dual variables $y^j$. These terms arise from coupled objectives and shared constraints and quantify how one agent's variables affect another agent's optimality conditions.
For some agent $i$, let $Z^i$ be the matrix having orthonormal columns spanning $\ker A^i$. 
Define the reduced Hessian as
\begin{equation}
    R^i=(Z^i)^\top H^i Z^i.
\end{equation}
We call a KKT point a strict saddle if at least one agent's reduced Hessian 
has a strictly negative eigenvalue; such a point violates that agent's second-order necessary condition~\cite{opt_textbook}.

\begin{theorem}[Instability of Weakly Coupled Strict Saddles]
Let $\tilde y$ satisfy $r(\tilde y)=0$, and suppose the active set is locally fixed and the active constraints satisfy the Linearly Independent Constraint Qualification (LICQ).
Suppose that for some agent $i$, the reduced Hessian $R^i$
has a negative eigenvalue. 
Let $\epsilon^i>0$ be the inertia correction selected so that $R^i+\epsilon^i I\succ0$. If $C=0$, then $\tilde y$ is an unstable fixed point of $\Phi_\alpha$ for every $0<\alpha\leq1$. Moreover, this instability persists for all sufficiently small cross-agent coupling matrices $C$.
\end{theorem}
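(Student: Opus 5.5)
Since $r(\tilde y)=0$, we linearize $\Phi_\alpha$ at $\tilde y$. The derivative of $[\tilde D_y r(y)]^{-1}$ multiplies $r(\tilde y)=0$, so the term involving it drops out. We assume the correction $E$ is locally constant, or at least continuous, near $\tilde y$; this is consistent with the active set being locally fixed. The Jacobian of the update map at the fixed point is then
\begin{equation*}
    M_\alpha = I-\alpha(\tilde D)^{-1}D = (1-\alpha)I+\alpha(\tilde D)^{-1}E,
\end{equation*}
where $D=D_y r(\tilde y)=K_{\mathrm{diag}}+C$ and $\tilde D=D+E$. Here we used $(\tilde D)^{-1}D=I-(\tilde D)^{-1}E$. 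Instability follows once $M_\alpha$ has an eigenvalue of modulus strictly greater than $1$, by the standard linearization (Perron) instability criterion for $C^1$ maps. It therefore suffices to show that $(\tilde D)^{-1}E$ has a real eigenvalue $\nu>1$: then $1-\alpha+\alpha\nu>1$ for every $0<\alpha\leq1$.

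\emph{Case $C=0$.} Everything is block diagonal, so we may work on agent $i$'s block alone. There $(\tilde D)^{-1}E=(K^i(\epsilon^i))^{-1}\operatorname{diag}(\epsilon^i I,0)$, where $K^i(\epsilon^i)$ is nonsingular by Theorem~\ref{th:in} because it has inertia $(d,t,0)$. Consider the eigenproblem $K^i(\epsilon^i)w=\nu^{-1}Ew$ with $w=(p,q)$. It reads
\begin{equation*}
    (H^i+\epsilon^i I)p+(A^i)^\top q=\nu^{-1}\epsilon^i p,\qquad A^ip=0.
\end{equation*}
Set $p=Z^iv$ and multiply the first equation by $(Z^i)^\top$; this eliminates $q$ and gives
\begin{equation*}
    (R^i+\epsilon^i I)v=\nu^{-1}\epsilon^i v.
\end{equation*}
Conversely, LICQ (full row rank of $A^i$) lets us recover $q$ from any such $v$. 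So the nonzero eigenvalues of $(\tilde D)^{-1}E$ on this block are exactly
\begin{equation*}
    \nu=\frac{\epsilon^i}{\rho+\epsilon^i},\qquad \rho\in\operatorname{spec}(R^i).
\end{equation*}
Take $\rho<0$, which exists by hypothesis. Since $\rho+\epsilon^i>0$ by the choice of $\epsilon^i$, we get $\nu>1$. The eigenvalue of $M_\alpha$ is then $1-\alpha+\alpha\nu = 1+\alpha|\rho|/(\rho+\epsilon^i)>1$. This proves instability of $\tilde y$ for every $0<\alpha\leq1$.

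\emph{Small coupling.} The matrix $M_\alpha$ depends continuously on $C$ as long as $\tilde D=K_{\mathrm{diag}}(\epsilon)+C$ stays invertible. This holds for $\|C\|$ small, since $K_{\mathrm{diag}}(\epsilon)$ is invertible: agents needing correction have inertia $(d,t,0)$, and the remaining blocks are assumed nonsingular. Eigenvalues depend continuously on matrix entries, so the eigenvalue $\nu>1$ found above moves only slightly and remains of modulus greater than $1$ for all sufficiently small $C$. We may need to take a possibly complex perturbed eigenvalue, but continuity keeps its modulus above $1$.

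The main obstacle is the linearization step: justifying that $E$ (the chosen $\epsilon^i$) is locally constant or continuous near $\tilde y$, so that $\Phi_\alpha$ is $C^1$ there. A near-minimal binary-search output need not be continuous in $y$. I would handle this by treating $\epsilon^i$ as frozen in a neighborhood of $\tilde y$ (the inertia failure persists nearby by continuity), or else by noting that any $\epsilon^i$ with $R^i+\epsilon^i I\succ0$ gives $\nu>1$ uniformly. A second, smaller point is the nonsingularity of the uncorrected blocks, which may need to be stated as an assumption. Also, when $C\neq0$, the threshold on $\|C\|$ may depend on $\alpha$ as $\alpha\to0$. This should be addressed by working with $(\tilde D)^{-1}E$, whose eigenvalue $\nu>1$ is independent of $\alpha$, rather than with $M_\alpha$ directly.
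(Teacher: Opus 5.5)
Your proof is correct and is essentially the paper's: your generalized eigenproblem for $(\tilde D)^{-1}E=I-(\tilde D)^{-1}D$ yields exactly the paper's eigenvector $(Z^iq,\eta)$, built from a negative eigenvector of $R^i$, and hence the same eigenvalue $1+\alpha|\rho|/(\rho+\epsilon^i)>1$ of $D\Phi_\alpha(\tilde y)$, followed by the same continuity argument in $C$. Your extra remarks make explicit what the paper leaves implicit: freezing $\epsilon^i$ so that $\Phi_\alpha$ is $C^1$, and getting a bound on $\|C\|$ that is uniform in $\alpha$ (for which the quantity to keep above $1$ is $\operatorname{Re}\nu'$ rather than $|\nu'|$, guaranteed once $|\nu'-\nu|<\nu-1$); also, no extra assumption on the uncorrected blocks is needed, since Algorithm~\ref{alg:inertia_correction} sets $\epsilon^j=0$ only when $K^j$ already has inertia $(d_j,t_j,0)$.
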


\begin{proof}
First consider $C=0$. Let $q$ be a unit eigenvector of $R^i$ associated with $\nu<0$. 
Since
\begin{equation}
    (Z^i)^\top\bigl(H^iZ^iq-\nu Z^iq\bigr)=0,
\end{equation}
the vector $H^iZ^iq-\nu Z^iq$ lies in $\operatorname{range}((A^i)^\top)$. Thus, there exists a multiplier perturbation $\eta$ such that
\begin{equation}
    (A^i)^\top\eta=\nu Z^iq-H^iZ^iq.
\end{equation}
Using $v^i=(Z^iq,\eta)$, the uncorrected and corrected KKT matrices satisfy the expressions
\begin{equation}
    K^iv^i=\begin{bmatrix}\nu Z^iq\\0\end{bmatrix},
    \qquad
    (K^i+E^i)v^i=\begin{bmatrix}(\nu+\epsilon^i)Z^iq\\0\end{bmatrix}.
\end{equation}
Therefore,
\begin{equation}
    (K^i+E^i)^{-1}K^iv^i
    =\frac{\nu}{\nu+\epsilon^i}v^i.
\end{equation}
Since $R^i+\epsilon^i I\succ0$, we have $\nu+\epsilon^i>0$. 
The Jacobian of the update map is therefore
\begin{equation}\label{eq:jac}
    D\Phi_\alpha(\tilde y)
    =I-\alpha[\tilde{D}_y r(\tilde y)]^{-1}D_y r(\tilde y).
\end{equation}
For $C=0$, the vector obtained by placing $v^i$ in the $i$th agent block and zeros elsewhere is an eigenvector of this Jacobian in (\ref{eq:jac})
with eigenvalue
\begin{equation}
    \rho=1-\alpha\frac{\nu}{\nu+\epsilon^i}>1.
\end{equation}
Hence, $\tilde y$ is an unstable fixed point.

Finally, the matrix $D\Phi_\alpha(\tilde y)$ depends continuously on $C$ whenever the corrected Jacobian is nonsingular. At $C=0$, the corrected Jacobian is block diagonal and nonsingular by construction, and it has an eigenvalue $\rho>1$. Nonsingularity and the eigenvalue outside the unit circle persist under sufficiently small perturbations~\cite{matrix_computation}. 
Thus, there exists $\delta>0$ such that $\tilde y$ remains unstable whenever $\lVert C\rVert<\delta$.
\end{proof}
The theorem covers strict saddles with a feasible negative-curvature direction when cross-agent coupling is absent or sufficiently weak. It does not assert global avoidance of every non-GNE KKT point: strong coupling can alter the spectrum of the joint update, and degenerate points with zero but no negative reduced curvature require a separate analysis. 
Nevertheless, the previous result establishes that inertia correction actively repels a well-defined class of saddle points, consistent with the behavior observed in the Numerical Simulations Section~\ref{sec:sim}.

\section{Numerical Simulations}\label{sec:sim}

We empirically evaluated the proposed solver across two distinct multi-agent dynamic game scenarios: highway merging and uncontrolled intersection traversal.
We compared the runtime of our solver, both with and without inertia correction, against the established ALGAMES~\cite{algames} and iLQGames~\cite{ilqgame} baselines.
We also studied the effect of inertia correction on the optimality of the solutions returned by our solver.

\subsection{Implementation Details}

The proposed solver utilizes a hybrid architecture. The core algorithm is implemented in C++ for improved performance, while the problem definition and high-level interface are provided in Python.
Cost functions and dynamics derivatives are generated using CasADi, and low-level linear algebra and active-set logic are handled by Eigen3.
All numerical benchmarks were conducted on a workstation equipped with an Intel i7-7700k (4.2\,GHz) CPU and 32\,GB of RAM, running Ubuntu 22.04.
Computations are single-threaded to provide a fair baseline comparison.
All solvers terminate at the same first-order KKT residual tolerance, in iLQGames, collision avoidance is represented using soft constraints.
\footnote{ The source code repository will be released with the final version of the manuscript.}

Both simulation benchmarks use a horizon of $T=20$ with four states and two controls per vehicle, the initial states are fixed and are not optimization variables.
Thus, an $N$-vehicle instance contains $120N$ primal decision variables and $80N$ dynamics equality constraints.
Each vehicle is represented by two collision circles, yielding four pairwise circle-separation inequalities per vehicle pair and stage, or $40N(N-1)$ collision inequalities over the horizon.
The merging and intersection benchmarks therefore have the same dimensions: for $N=2,\ldots,8$, the respective (decision-variable, total-constraint) counts are $(240,240)$, $(360,480)$, $(480,800)$, $(600,1200)$, $(720,1680)$, $(840,2240)$, and $(960,2880)$.
Only the currently active subset of the collision inequalities is included in each Newton KKT system.
The quadratic growth of the pairwise constraints and the associated factorization fill-in ultimately limit the centralized formulation to a moderate number of strongly interacting agents.
Large robot swarms would generally require a localized interaction model, distributed solution method, or hierarchical information structure, and this analysis is outside the intended scope of the present solver.

\subsection{Highway Merging Scenario}

Highway merging is a standard benchmark for dynamic games involving competitive behavior.
The scenario initializes $N$ vehicles with random longitudinal positions and velocities across two adjacent lanes.
The objective for all agents is to merge into the left lane while maximizing forward progress.
Each vehicle is modeled using the kinematic bicycle model with state $x=[p_x, p_y, v, \theta]^\top$ and control $u=[a,\delta]^\top$, representing longitudinal position, lateral position, speed, heading, acceleration, and steering angle, respectively.
The stage cost for agent $i$ is defined as:
\begin{equation}
J^i_k(x_k,u^i_k) = \| x_k^i - x^i_{\textrm{ref}} \|_{Q_r}^2 + \| u_k^i \|_{R}^2 - \gamma \sum_{j \neq i} p_{x,k}^j,
\end{equation}
where the first term penalizes deviations from the target lane and reference speed, and the second term regularizes control effort.
The third term acts as a ``progress reward,'' incentivizing agent $i$ to overtake or maintain a lead relative to other agents $j$.
This coupling in the cost function makes the merging scenario a general-sum game.

Collision avoidance is enforced via coupled inequality constraints.
Each vehicle is approximated by two bounding circles, a constraint is violated if the distance between any pair of circles from different agents falls below a safety threshold.
While the cost function is quadratic, the non-convex collision constraints create a challenging optimization landscape with multiple local minima.

\begin{figure}[t!]
  \centering
  \begin{subfigure}{\linewidth}
  \centering
  \includegraphics[clip, trim=300 52 280 52,angle=-90,width=0.85\linewidth]{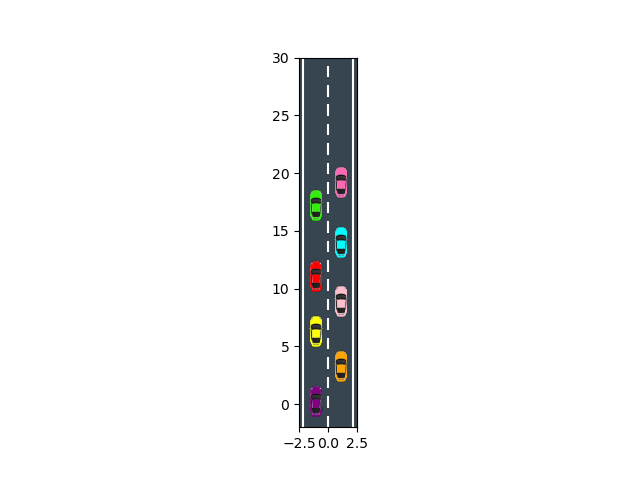}
    \caption{Initial State ($k=0$)}
  \end{subfigure}
  \begin{subfigure}{\linewidth}
  \centering
  \includegraphics[clip, trim=300 52 280 52,angle=-90,width=0.85\linewidth]{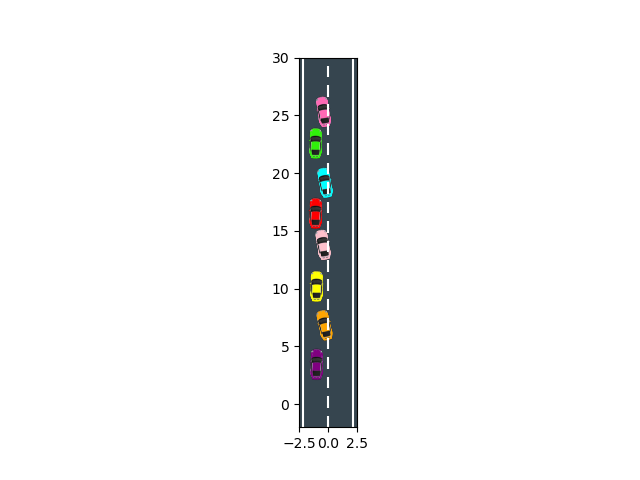}
    \caption{Intermediate State ($k=T/2$)}
  \end{subfigure}
  \begin{subfigure}{\linewidth}
  \centering
  \includegraphics[clip, trim=300 52 280 52,angle=-90,width=0.85\linewidth]{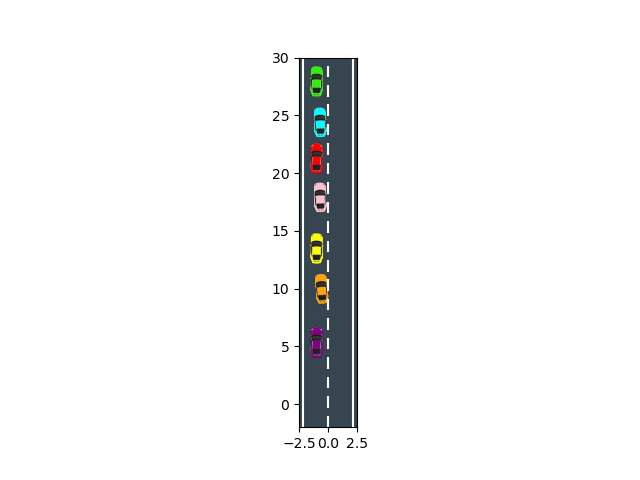}
    \caption{Final State ($k=T$)}
  \end{subfigure}
  \caption{Time-lapse of a solution to the 8-car merging game. Agents successfully negotiate the merge while maintaining safety distances.}
  \label{fig:merge}
\end{figure}

\subsection{Uncontrolled Intersection Scenario}

In this scenario, $N$ vehicles are initialized with random positions and velocities approaching a four-way intersection.
The objective is to traverse the intersection while maintaining the current lane and reference speed without collisions.
The dynamics and collision constraints are identical to the highway merging case.
The cost function is defined as:
\begin{equation}
J^i_k(x_k,u^i_k) = \| x_k^i - x^i_{\textrm{ref}} \|_{Q_r}^2 + \| u_k^i \|_{R}^2.
\end{equation}
Unlike the merging scenario, this cost structure does not include direct coupling terms (e.g., progress rewards).
However, the problem remains a Generalized Nash Equilibrium Problem (GNEP) due to the shared, non-convex collision constraints.
The intersection scenario typically exhibits more complex interaction patterns than merging, as agents must negotiate right-of-way from orthogonal directions, often requiring significant control deviations to avoid collisions.

\begin{figure}[t]
  \centering
  \begin{subfigure}[b]{0.3\linewidth}
  \includegraphics[trim=140px 50 120 60,clip,width=\linewidth]{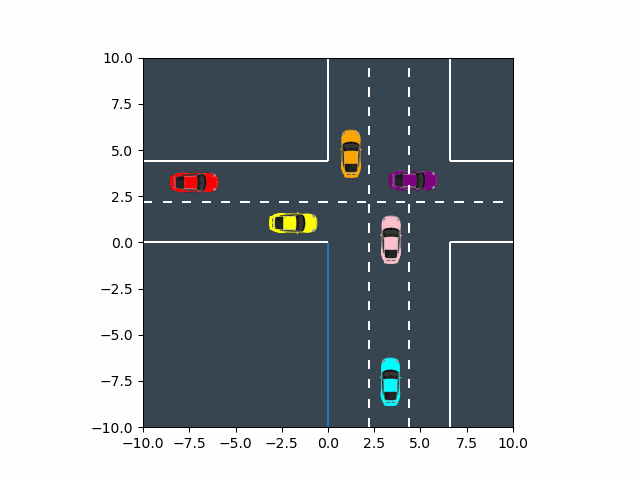}
    \caption{Initial State }
  \end{subfigure}
  \begin{subfigure}[b]{0.3\linewidth}
  \includegraphics[trim=140px 50 120 60,clip,width=\linewidth]{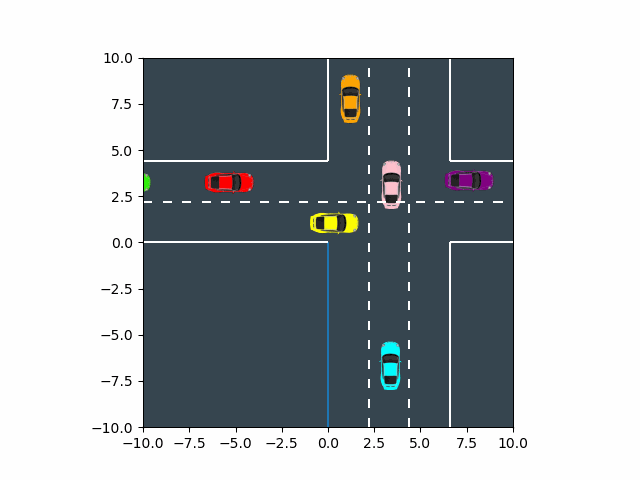}
    \caption{Interim State }
  \end{subfigure}
  \begin{subfigure}[b]{0.3\linewidth}
  \includegraphics[trim=140px 50 120 60,clip,width=\linewidth]{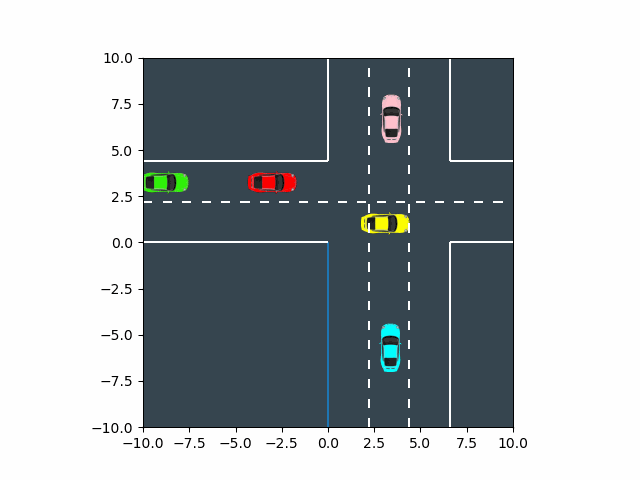}
    \caption{Final State }
  \end{subfigure}
  \caption{Snapshots of the 6-car intersection game.}
  \label{fig:intersection}
\end{figure}

\subsection{Benchmark Results}

We evaluated the solver's performance by averaging results over 100 trials with randomized initial conditions.
The same random seed was used across all compared algorithms to ensure consistency.

\subsubsection{Optimality Verification}

We define the ``Optimality Rate'' as the fraction of KKT-converged solutions that also satisfy the per-agent Second-Order Sufficient Conditions (SOSC).
A low optimality rate therefore indicates that many returned KKT solutions cannot be certified as strict local Generalized Nash Equilibria (GNEs) by the SOSC test. 
It does not classify degenerate points that fail SOSC without exhibiting strict negative curvature.

Figure~\ref{fig:merge_conv} illustrates the impact of the proposed inertia correction mechanism.
Without correction, the Newton method frequently converges to KKT points that fail the SOSC test, particularly as the number of agents and interactions increases.
With inertia correction enabled, the solver modifies the Hessian when the target inertia is not attained, significantly increasing the likelihood of returning an SOSC-certified local GNE.
The lower single-start optimality rates in the more strongly coupled merging games also reflect sensitivity to the initial control guess, as expected for a local solver applied to a nonconvex problem.
To evaluate this effect, we repeat the inertia-corrected solver with up to ten independently randomized initial control guesses and retain the first solution satisfying both the KKT conditions and SOSC.
This restart strategy achieves an optimality rate of $100\%$ for 2--7 vehicles and $98\%$ for 8 vehicles.
The restart result does not provide a global convergence guarantee, but it shows that SOSC-satisfying solutions are consistently recoverable and that the initial basin of attraction is the primary limitation in these instances.

\begin{figure}[tb!]
    \centering
    \includegraphics[width=0.9\linewidth]{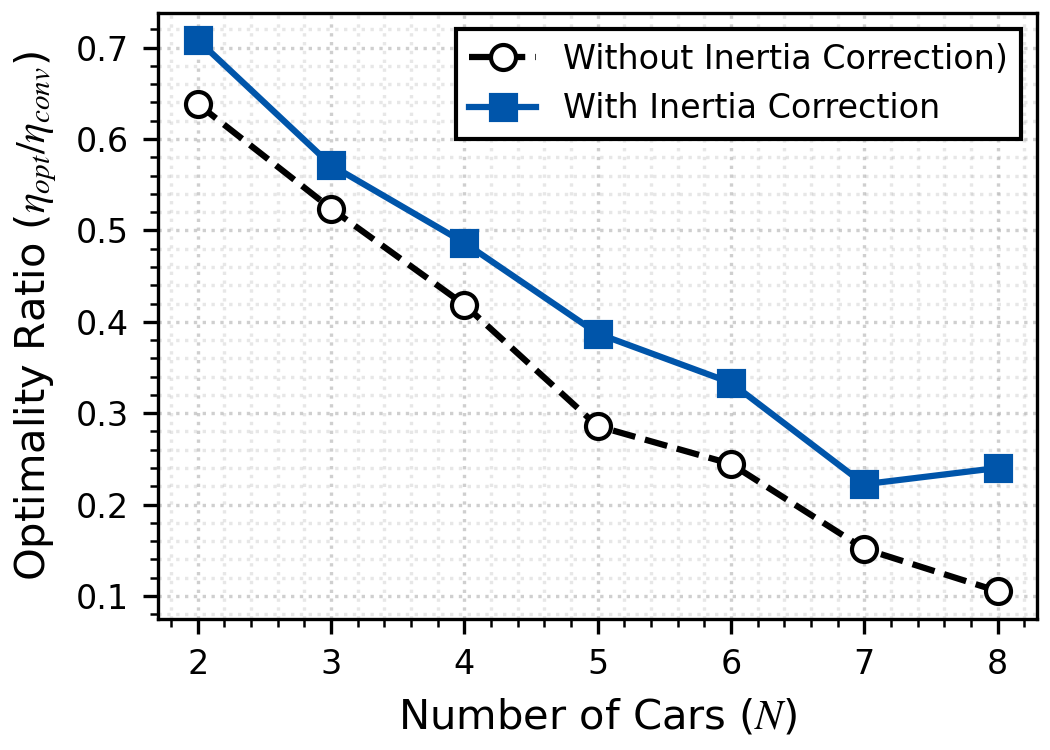}
    \caption{Single-start optimality rate in the merging game with and without inertia correction. The proposed correction significantly increases the fraction of KKT-converged solutions that satisfy the per-agent SOSC.}
    \label{fig:merge_conv}
\end{figure}

\subsubsection{Regularization Strategy}

We compared our binary-search correction with a geometric baseline that multiplies the regularization by a factor of ten until the KKT matrix attains the target inertia.
Figure~\ref{fig:reg_comparison} shows the average KKT residual at each solver iteration for representative 2-, 3-, 6-, and 7-car merging games, the subset is shown to keep the plot readable.
Both methods initially reduce the residual and then enter a transient plateau before terminal convergence.
The near-minimal correction generally exits this plateau earlier, most notably in the larger games, and reaches the terminal convergence regime in fewer iterations.
In contrast, the geometric baseline is more likely to over-regularize the reduced Hessian, suppressing useful curvature information and producing a smaller, more gradient-like step.
The rapid terminal decrease by several orders of magnitude is also consistent with the local quadratic convergence established in Theorem~\ref{th:local_convergence}, because the curves average multiple trials, they should be viewed as empirical supporting evidence rather than a direct rate measurement.

\begin{figure}[tb!]
    \centering
    \includegraphics[trim=10px 10 10 10,clip,width=0.95\linewidth]{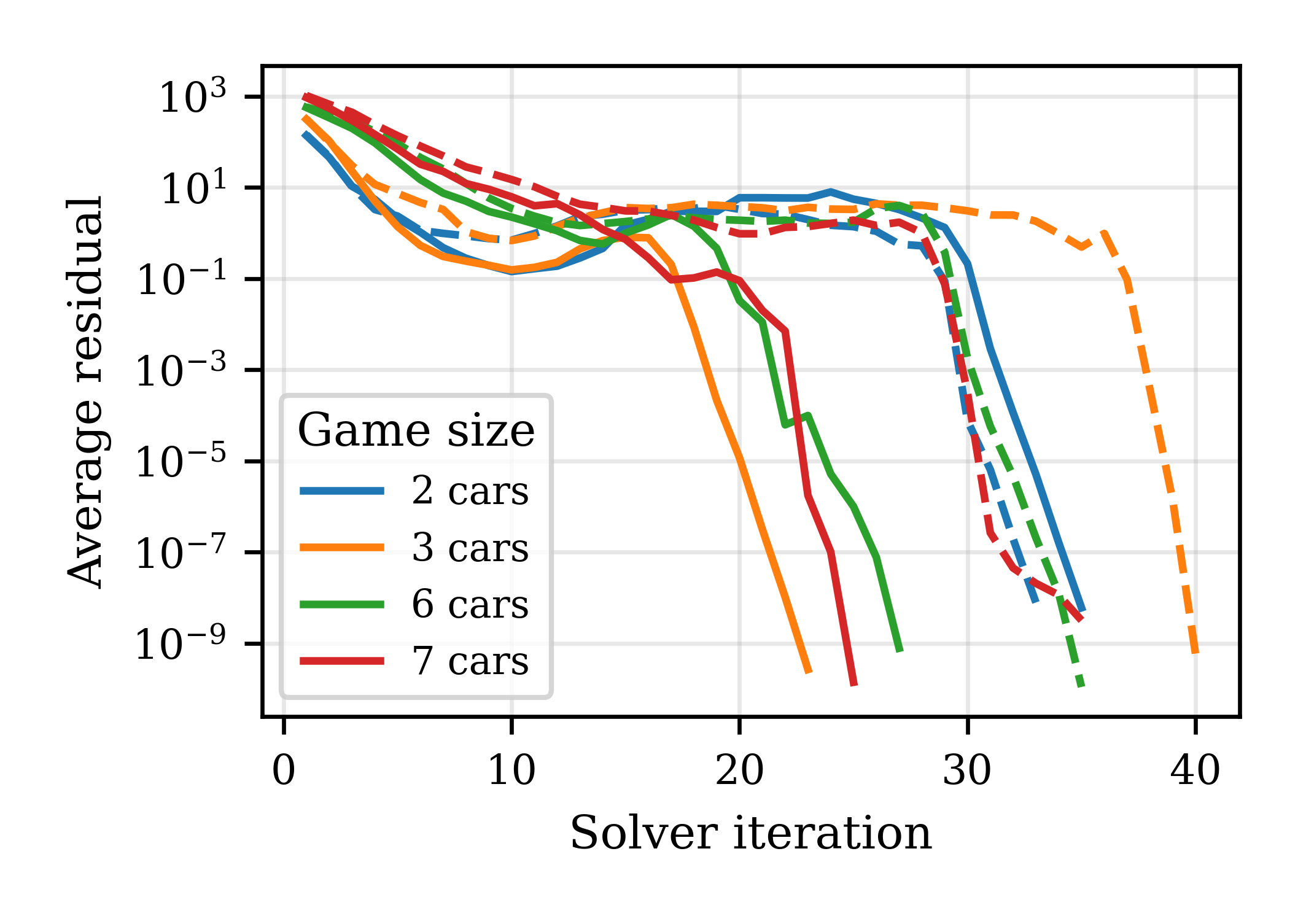}
    \caption{Average KKT residual versus solver iteration for selected merging games. Solid lines use the proposed near-minimal binary-search correction.
    Dashed lines use the geometric baseline that increases the regularization by a factor of ten. The near-minimal correction generally shortens the transient stagnation period.}
    \label{fig:reg_comparison}
\end{figure}

\subsubsection{Runtime Performance}

Figures~\ref{fig:merge_runtime} and \ref{fig:intersection_runtime} compare the runtime of the proposed solver, with and without inertia correction, against ALGAMES~\cite{algames} and iLQGames~\cite{ilqgame}.
Both variants of the proposed solver are significantly faster than the two baselines across all tested problem sizes and in both scenarios.
For the highway merging game, inertia correction introduces a modest runtime overhead relative to the uncorrected solver, but the corrected solver remains substantially faster than ALGAMES and iLQGames.
For the intersection game, the runtimes with and without inertia correction are nearly identical, indicating that optimality verification and correction can be incorporated at little additional computational cost in this scenario.
The difference in added cost likely arises from the sparser interaction in the intersection game, which results in a simpler $LDL^\top$ factorization.
The per-agent inertia checks are independent and can also be parallelized in performance-critical applications.

\begin{figure}[tb!]
    \centering
    \includegraphics[trim=0px 0 0 15,clip,width=0.9\linewidth]{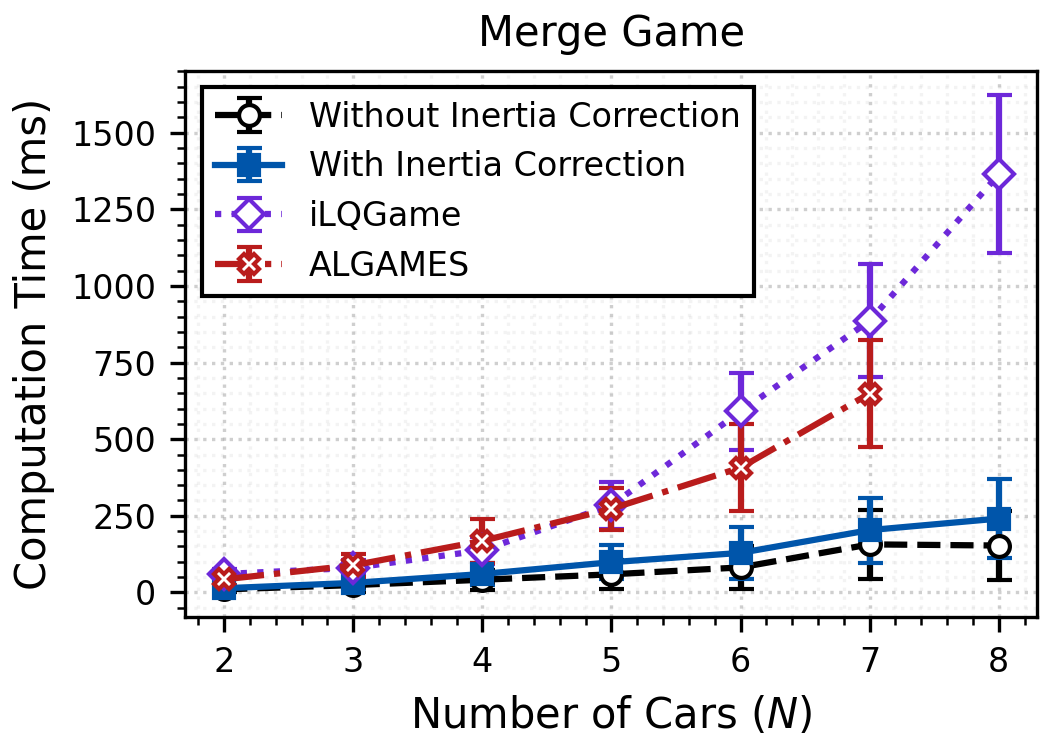}
    \caption{Computation time versus number of agents for the highway merging game. Both variants of the proposed solver are significantly faster than ALGAMES and iLQGames.}
    \label{fig:merge_runtime}
\end{figure}

\begin{figure}[tb!]
    \centering
    \includegraphics[trim=0px 0 0 15,clip,width=0.9\linewidth]{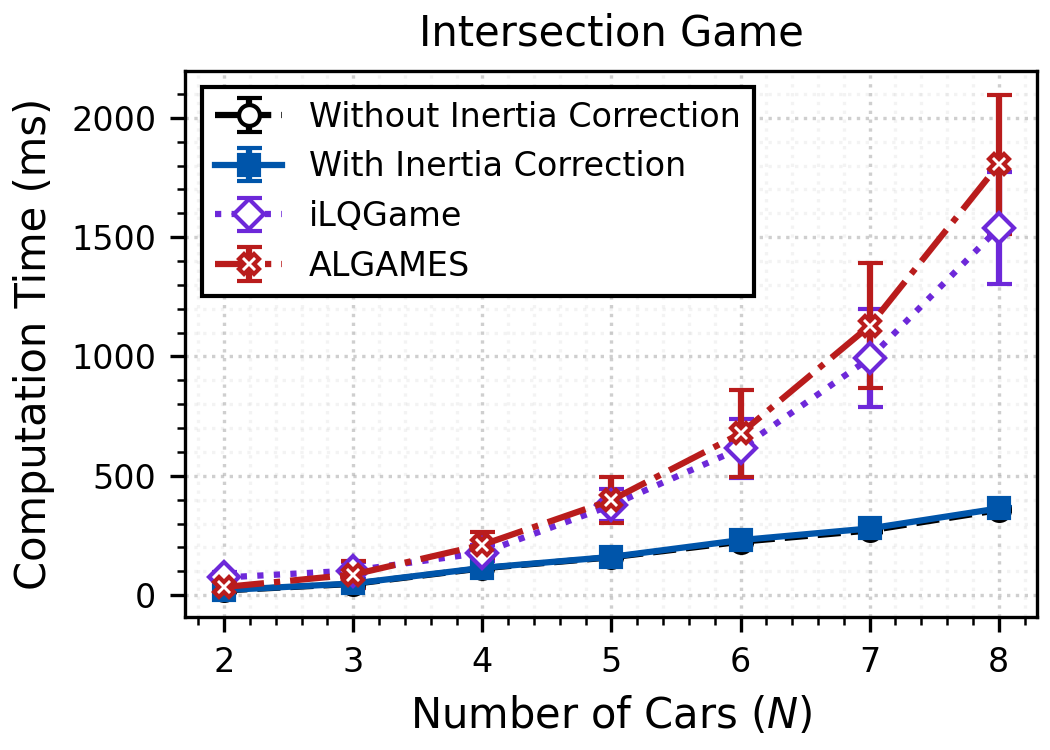}
    \caption{Computation time versus number of agents for the intersection game. Inertia correction adds negligible runtime overhead, and both variants of the proposed solver are significantly faster than ALGAMES and iLQGames.}
    \label{fig:intersection_runtime}
\end{figure}

\begin{figure}[tb!]
    \centering
    \includegraphics[width=0.9\linewidth]{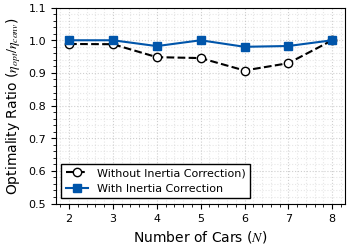}
    \caption{Single-start optimality rate (SOSC-certified solutions divided by KKT-converged solutions) in the intersection game.}
    \label{fig:intersection_conv}
\end{figure}

\section{Experimental Validation}\label{sec:exp}
To validate the real-world applicability of the proposed solver, we deployed it as a receding horizon planner on the BuzzRacer platform~\cite{buzzracer}. 
The hardware experiments feature a dynamic 4-car racing scenario governed by a hierarchical planning and control architecture. 
At the high level, the planner leverages the proposed game solver to compute a local Generalized Nash Equilibrium over a 20-step receding horizon, with a typical planning iteration converging in under 130 ms. 
These trajectories are subsequently passed to lower-level, vehicle-specific path-tracking controllers running at 100Hz.
The experiment was repeated multiple times, with the vehicles placed in various starting configurations along the track. 
Throughout the experiments, the vehicles navigated a tight track in close proximity to one another, avoiding collisions with both stationary boundary walls and dynamic opponents. 
During interactions, the autonomous agents exhibit aggressive, human-like driving behaviors such as strategic weaving, aggressive overtaking, and sustained side-by-side racing. 
Figure~\ref{fig:exp} shows key snapshots from these dynamic interactions.

\begin{figure}[t]
  \centering
  \begin{subfigure}[b]{0.3\linewidth}
  \includegraphics[trim=0px 0 0 30,clip,width=\linewidth]{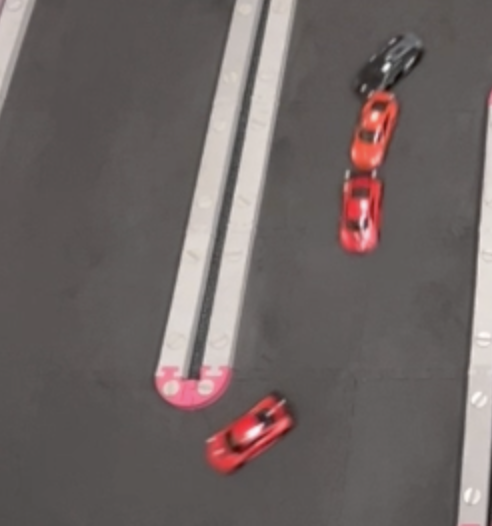}
  \end{subfigure}
  \begin{subfigure}[b]{0.3\linewidth}
  \includegraphics[trim=0px 10 0 30,clip,width=\linewidth]{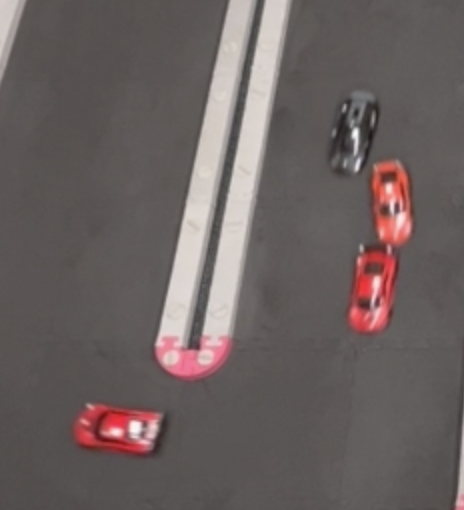}
  \end{subfigure}
  \begin{subfigure}[b]{0.3\linewidth}
  \includegraphics[trim=10px 0 0 10,clip,width=\linewidth]{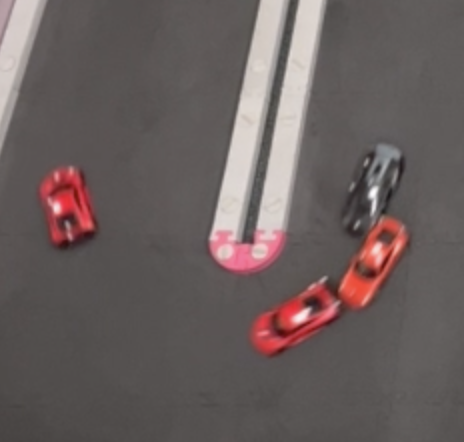}
  \end{subfigure}
  \\[1ex]
  \begin{subfigure}[b]{0.3\linewidth}
  \includegraphics[trim=0px 0 0 10,clip,width=\linewidth]{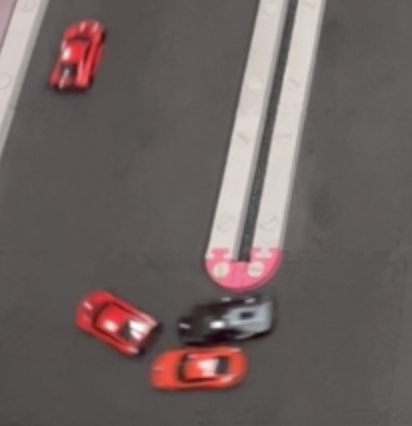}
  \end{subfigure}
  \begin{subfigure}[b]{0.3\linewidth}
  \includegraphics[trim=0px 0 0 0,clip,width=\linewidth]{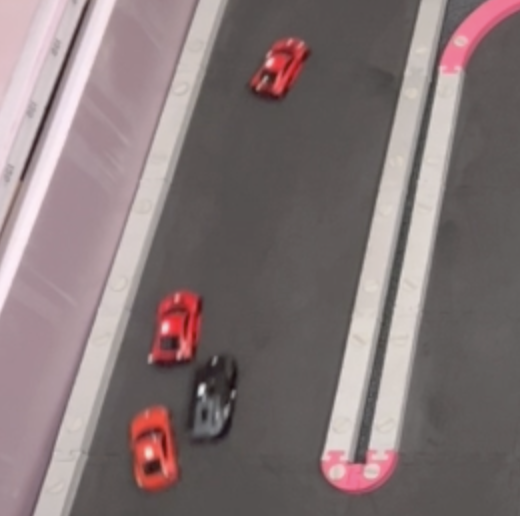}
  \end{subfigure}
  \begin{subfigure}[b]{0.3\linewidth}
  \includegraphics[trim=0px 0 50 0,clip,width=\linewidth]{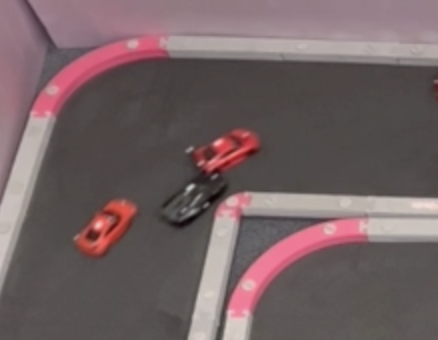}
  \end{subfigure}
  \caption{Snapshots of an overtake during the experiment.}\label{fig:exp}
\end{figure}

Supplementary video footage of the simulations and experiments is provided to further demonstrate the solver's capabilities.

\section{Conclusion}

We have presented an efficient Newton-based solver for Generalized Nash Equilibria in constrained dynamic games.
A key contribution of this work is the integration of an efficient SOSC certificate based on the inertia of the KKT matrix.
Unlike existing solvers that report only first-order convergence, our method identifies whether a returned KKT point satisfies sufficient conditions for a strict local GNE.
Furthermore, we introduced an inertia-correction mechanism that regularizes the descent direction, destabilizes weakly coupled strict saddle points, and empirically improves convergence to valid GNEs.
Numerical benchmarks demonstrate substantially lower runtime than the ALGAMES and iLQGames baselines while showing improved robustness in complex multi-agent scenarios.
Physical experiments on miniature autonomous race cars further confirm the solver's applicability for real-time robotic control.

\balance
\bibliographystyle{ieeetran}
\bibliography{refs}
\end{document}